\renewcommand{\cite}[1]{\citep{#1}}
  \definecolor{mydarkblue}{rgb}{0,0.08,0.45}
\colorlet{shadecolor}{yellow}
\newtheorem{theorem}{Theorem}[section]
\newtheorem{lemma}[theorem]{Lemma}
\theoremstyle{definition}
\title{Effects of the optimisation of the margin distribution on generalisation in deep architectures}
\author{Lech ~Szymanski\\
    University of Otago\\
    \texttt{lechszym@cs.otago.ac.nz}\\
    \And
    Brendan ~McCane\\   
    University of Otago\\
    \texttt{mccane@cs.otago.ac.nz}\\
    \And
    Wei ~Gao\\
    Nanjing University\\
    \texttt{gaow@lamda.nju.edu.cn}
    \And
    Zhi-Hua ~Zhou \\
    Nanjing University\\
    \texttt{zhouzh@lamda.nju.edu.cn}
}
\begin{document} 

\maketitle

\begin{abstract} 
Despite being so vital to success of Support Vector Machines, the principle of separating margin maximisation is not used in deep learning.  We show that minimisation of margin variance and not maximisation of the margin is more suitable for improving generalisation in deep architectures.  We propose the Halfway loss function that minimises the Normalised Margin Variance (NMV) at the output of a deep learning models and evaluate its performance against the Softmax Cross-Entropy loss on the MNIST, smallNORB and CIFAR-10 datasets.
\end{abstract} 

\section{Introduction}

Support Vector Machines (SVM) guarantee best generalisation in a classification task for a chosen feature extraction function \cite{Cortes.etal:1995, Vapnik:1995}.  While the question of the choice of appropriate feature function (or its parameters) still remains, the training is guaranteed to give the optimal answer for the choice made.  This assurance of generalisation comes from the principle of maximising the margin of separation.

Boosting methods build a feature space during the training process from an ensemble of weak classifiers \cite{Schapire:1990}.  It has been shown that their resistance to overfitting is due to the effect these methods have on the distribution of points around the margin \cite{Shapire.etal:1998, Reyzin.etal:2006, Wang.etal:2011}.  \citet{Gao.etal:2013}  theoretically showed that AdaBoost is resistant to overfitting because it implicitly optimises the classification margin distribution by maximising average margin and minimising margin variance simultaneously.  In particular, they emphasised that the minimisation of margin variance is very important, which was ignored by most previous studies on learning algorithm design. \citet{Zhang.etal:2013} proposed the LDM which maximises average margin and minimises margin variance simultaneously, and achieved consistently better performance than SVMs; later, \citet{Zhang.etal:2016} proposed Optimal Margin Machine (ODM) which demonstrates even better performance.  

In this paper we take up the idea of margin distribution and apply it to deep learning.  We theorise that in deep architectures with traditional backpropagation training, maximising the margin of separation is not likely to positively affect generalisation.  However, we demonstrate that Halfway loss, which aims to minimise the normalised margin variance (NMV), does lead to improved generalisation in terms of outperforming the Softmax Cross-Entropy loss on the MNIST, smallNORB and CIFAR10 datasets.  

\section{Previous work}

A number of different approaches have made an effective use of the principle of margin maximisation in artificial neural networks.  \citet{Jayadeva.etal:2002} combined it with a decision tree-based training and \citet{Nishikawa.etal:2002} incorporated it into the CARVE algorithm \cite{Young.etal:1998}.  Both of these methods are based on a boosting-like training scheme, where the feature space is built one neuron (hypothesis) at a time, focusing on the remaining, incorrectly classified subset of the training data.  Although in spirit these methods pertain to neural networks, the performance leverage they gain thanks to maximising the margin has probably more to do with the boosting aspects of the feature building rather than the deep nature of the neural network used.

The meticulously named Maximum Margin Gradient Descent with adaptive learning rate (MMGDX) algorithm proposed by \cite{Ludwig.etal2010}'s works in a more traditional, fixed connectionist architecture trained with the backpropagation algorithm.  The success of that algorithm most likely lies in the fact that the proposed Means Squared Error (MSE)-like loss, not unlike the Halfway loss introduced in this paper, might in fact be also minimising the distribution of margin variance.  It also should be noted that MMGDX was tested only on single-hidden layer neural network with sigmoid activation function, and the superior performance only showcased on single-class problems.   

The above methods train neural networks by maximising the geometric margin.  Hence, in some sense, they consider the mean margin, but ignore the influence of margin variance (with the exception of MMGDX, which unintentionally might be reducing the variance).  The Halfway loss minimises the margin variance and is not limited to the sigmoid activation function.  We can test it on fully connected, as well as convolutional, neural networks with Rectifier Linear Unit (ReLU) activation function \cite{Hahnloser.etal:2003, Glorot.etal:2011} and compare its performance to Softmax Central-Entropy loss on multi-class image recognition datasets.  


\section{Margin}

Let's denote by $\mathcal{X}\in\mathbb{R}^d$ the instance space and by $\mathcal{Y}\in\{+1,-1\}$ the label set governed by some distribution $\mathcal{D}$ over $\mathcal{X}\times\mathcal{Y}$.  Let's assume that we have a set $S=\{(\mathbf{x}_1,y_1),(\mathbf{x}_2,y_2),...,(\mathbf{x}_m,y_m)\}$ of $m$ points drawn identically and independently from $D$.  Given some feature extraction function $\phi(\mathbf{x})\in\mathbb{R}^k$ and a linear classifier parametrised by bias $b$ and a unit vector $\mathbf{w}\in\mathbb{R}^k$, so that $|\mathbf{w}|=1$,  we can define the margin of instance $(\mathbf{x}_i,y_i)$, which is really a distance of the point from the classification boundary, as:
\begin{equation}\gamma_i=y_i\left [\mathbf{w}^T\phi(\mathbf{x}_i)+b\right ]\mbox{.}\label{eqn_margin}
\end{equation}
Given classification error as 
\begin{equation*}
\epsilon(\gamma_i)=\begin{cases}0	& \gamma_i>0 \\
					   1    & \mbox{otherwise, }	
\end{cases}
\end{equation*}
the goal of binary classification is to search for $\phi(\mathbf{x})$, the direction of unit vector $\mathbf{w}$, and value of $b$ such that the expectation of $\epsilon (\gamma$) over  distribution $\mathcal{D}$, is minimised.  Since typically $\mathcal{D}$ is not known, the best way to estimate the expectation is by computing the average of $\epsilon(\gamma_i)$ over the points from the training sample $S$ and then try to maximise it. \citet{Breiman:1999} believed that AdaBoost also tried to maximise the minimum margin. Later, \citet{Reyzin.etal:2006} claimed that AdaBoost emphasises the average margin or median margin. The average margin is also called "margin mean", defined as:
\begin{equation}
\mu=\frac{1}{m}\sum_{i=1}^m\gamma_i\mbox{.}
\end{equation}

\subsection{Margin across different feature spaces}

In SVMs the feature extracting function $\phi(\mathbf{x})$ is user-definable, but fixed during optimisation.  The learning process for a given choice of $\phi(\mathbf{x})$ is the search for $\mathbf{w}$ and $b$ that maximises the geometric margin while providing correct classification to the degree dictated by the choice of the soft margin parameter.  A fixed feature space and margin maximisation guarantees an upper bound on probability of misclassification \cite{Cristianini.etal:2000}.  The challenge with SVMs is to determine the best $\phi(\mathbf{x})$ by selecting the right kernel function and its parameters as well as an appropriate soft margin penalty factor for misclassification.

\begin{figure}[t!]
\vskip 0.2in
\centering
\includegraphics[width=0.6\columnwidth]{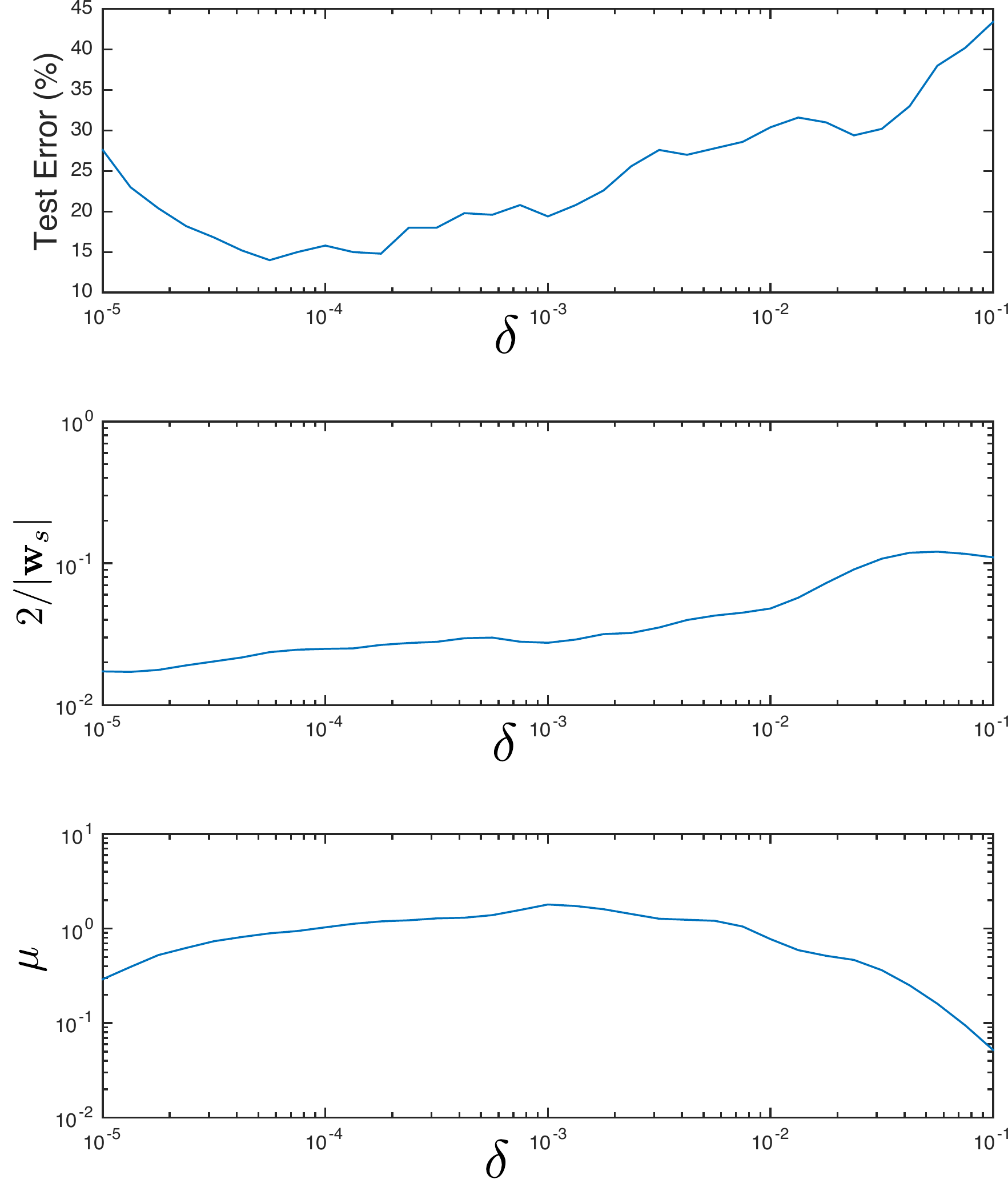}
\caption{Test error and the corresponding values for the maximum geometric margin and the mean margin found by SVM training on a two-class subset from the smallNORB dataset using feature extraction function defined in Equation \ref{eqn_rbfphi} over different values of parameter $\delta$.}\label{fig_margin}
\end{figure} 

Given our intention to apply margin theory to deep learning, we are prompted to investigate the behaviour of the margin in an SVM while varying $\phi(\mathbf{x})$.  \citet{Lanckriet.etal:2004} demonstrated that with some constraints and restrictions on $\phi(\mathbf{x})$, maximisation of the margin still provides an upper bound on probability of misclassification.  However, in deep learning $\phi(\mathbf{x})$ can be a universal function approximator, hence we are interested in margin behaviour in general.  Let us conduct a simple experiment.  

Let's define the following feature extraction function
\begin{equation}\label{eqn_rbfphi}
\phi(\mathbf{x})=\begin{bmatrix}e^{-\delta|\mathbf{x}-\mathbf{x}_1|} & e^{-\delta|\mathbf{x}-\mathbf{x}_2|} \hdots e^{-\delta|\mathbf{x}-\mathbf{x}_m|}\end{bmatrix}^T\mbox{,}
\end{equation}
where $\mathbf{x}_i$ for $i=1...m$ correspond to the input data from the training set.  The feature space in the above definition of $\phi(\mathbf{x})$ is the feature space of a Radial Basis Function (RBF) neural network   parameterised by $\delta$ with the centres corresponding to all points in the training dataset.  It is not as computationally efficient as the Gaussian kernel, but it gives a similar feature space while still allowing for the computation of $\phi(\mathbf{x})$ (which Gaussian kernel does not).  After SVM training on $\phi(\mathbf{x})$ for a given value of $\delta$, and with the soft margin parameter $C=100$, we can compute $\mathbf{w}_{s}=|\sum_{i=1}^m y_i\alpha_i\phi(\mathbf{x}_i)|$, where $\alpha_i>0$ are the support vectors found by the SVM.  With the ability to compute $\mathbf{w}_s$, we can evaluate the value of the geometric margin, $\frac{2}{|\mathbf{w}_s|}$, as well as compute the unit vector $\mathbf{w}=\frac{\mathbf{w}_s}{|\mathbf{w}_s|}$ and thus the mean margin, $\mu$, for different values of $\delta$.  

Figure \ref{eqn_margin} shows how the test error relates to the maximum geometric and mean margin values over a range of different $\delta$'s in $\phi(\mathbf{x})$ for a two-class subset problem from the smallNORB dataset \cite{LeCun.etal:2004}.  Note that the value of maximum geometric margin steadily climbs with $\delta$ despite the fact that the test error dips, reaches a minimum, and then starts climbing as the value of $\delta$ increases.  The \textit{best} performance does not correspond to the largest value of the \textit{best} geometric margin found across different $\phi(\mathbf{x})$s.  

We need to acknowledge that the lack of correlation between best test data performance and maximum value of the margin in the experiment above does not mean that there isn't an upper bound on misclassification for changing $\phi(\mathbf{x})$.  However, given that current proofs for existence of the bound require certain constraints on the structure of $\phi(\mathbf{x})$ \cite{Lanckriet.etal:2004}, the result of our simple experiment prompts us to hypothesise that in general it is the relative value of the margin within given $\phi(\mathbf{x})$ and not its absolute value across different realisations of $\phi(\mathbf{x})$ that needs to be maximised in order to improve generalisation.  This would suggest that it might not be advantageous to maximise the margin across different realisations of $\phi(\mathbf{x})$.  

Figure \ref{fig_margin} also shows the mean margin value resulting from SVM training on different realisations of $\phi(\mathbf{x})$.  Although it doesn't increase steadily with $\delta$, its maximum value does not coincide with the lowest test error either.  The value of the mean margin is not consistently related to best test performance for varying $\phi(\mathbf{x})$.

\subsection{Margin in deep architectures}

The simple experiment from the previous section suggests that maximising margin in architectures where $\phi(\mathbf{x})$ is not constant may not lead to a better generalisation.  We can go even further and show that a simple linear transformation facilitated by $\phi(\mathbf{x})$ is sufficient to produce arbitrary margin value without changing the relative position of the points with respect to the separating line given by $\mathbf{w}$.  

\begin{lemma}\label{lemma_meanmargin}
The mean margin of a set of points classified by unit vector $\mathbf{w}$, bias $\beta b$, and a feature extracting function $\phi(\mathbf{x})=\beta\widehat{\phi}(\mathbf{x})$, such that $\mu>0$, can be made arbitrarily large by varying the value of $\beta>1$.
\end{lemma}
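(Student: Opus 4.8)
The plan is to observe that scaling the feature map and the bias by a common factor $\beta$ rescales the margin of every single point by exactly $\beta$, while leaving $\mathbf{w}$ — and therefore the unit-norm requirement in the definition (\ref{eqn_margin}) — untouched; the mean margin then inherits the same linear scaling and is unbounded in $\beta$.

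First I would substitute $\phi(\mathbf{x}_i)=\beta\widehat{\phi}(\mathbf{x}_i)$ and bias $\beta b$ into (\ref{eqn_margin}). Writing $\widehat{\gamma}_i=y_i\bigl[\mathbf{w}^T\widehat{\phi}(\mathbf{x}_i)+b\bigr]$ for the margin one gets from the un-scaled feature map $\widehat{\phi}$ with bias $b$, the factor $\beta$ comes straight out: $\gamma_i=y_i\bigl[\mathbf{w}^T\beta\widehat{\phi}(\mathbf{x}_i)+\beta b\bigr]=\beta\,\widehat{\gamma}_i$. Since $\mathbf{w}$ is not rescaled it is still a unit vector, so (\ref{eqn_margin}) remains the applicable definition and no renormalisation of the classifier is needed.

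Next I would average over the sample: by linearity, $\mu=\frac{1}{m}\sum_{i=1}^m\gamma_i=\beta\cdot\frac{1}{m}\sum_{i=1}^m\widehat{\gamma}_i=\beta\,\widehat{\mu}$, where $\widehat{\mu}$ is the mean margin obtained at $\beta=1$. The hypothesis $\mu>0$ is equivalent to $\widehat{\mu}>0$ (the two differ only by the positive factor $\beta$), so $\mu(\beta)=\beta\,\widehat{\mu}$ is a strictly increasing, unbounded function of $\beta$ on $(1,\infty)$: for any target $M>0$ it suffices to take $\beta>\max\{1,\,M/\widehat{\mu}\}$ to force $\mu>M$, which is the claim. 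I would also note, to tie back to the discussion preceding the lemma, that this dilation does not move the decision boundary relative to the data — $\mathrm{sign}(\gamma_i)=\mathrm{sign}(\widehat{\gamma}_i)$ for every $i$ — so the classification is unchanged even as the mean margin is inflated without limit.

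There is no real technical obstacle here; the result is essentially a one-line computation. The only points that need care are the bookkeeping ones: the bias must be scaled in step with the feature map (if it were left fixed, the separating hyperplane and the sign of each margin would shift), and one should state explicitly that the unit-norm constraint on $\mathbf{w}$ continues to hold, so that Definition (\ref{eqn_margin}) applies verbatim and the quantity being made large is genuinely the (geometric) margin rather than an unnormalised functional margin.
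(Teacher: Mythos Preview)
Your proposal is correct and follows essentially the same approach as the paper: factor $\beta$ out of the margin to get $\gamma_i=\beta\widehat{\gamma}_i$, average to obtain $\mu=\beta\widehat{\mu}$, and observe that $\mathbf{w}$ remains a unit vector while the sign of each $\gamma_i$ is preserved. The paper's version is terser (it stops at $\beta\mu>\mu$ rather than exhibiting an explicit $\beta$ for a given target $M$), but the argument is the same one-line scaling computation.
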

\begin{proof}
The lemma is rather obvious, since 
\begin{equation*}
\beta\gamma_i=y_i\left [\mathbf{w}^T\left (\beta\widehat{\phi}(\mathbf{x})\right )+\beta b\right ]\mbox{,}
\end{equation*}
which produces a mean margin $\beta\mu>\mu$ if $\mu>0$ and $\beta>1$.  Note that, while bias of the linear classifier is allowed to vary, $\mathbf{w}$ remains a unit vector, as stipulated in our definition of the margin.  It is also important to note that this transformation does not change the sign of any $\gamma_i$ - all the points are classified exactly the same as before and after multiplication by $\beta$.  Thus, this transformation doesn't change anything about the classification decision in the space of $\phi(\mathbf{x})$.
\end{proof}

To understand the significance of Lemma \ref{lemma_meanmargin}, let us suppose that we are trying to maximise the mean margin in a computational model where the feature extracting function is defined by a neural network, such that 
\begin{equation*}
\phi(\mathbf{x})=f\left (\widehat{\mathbf{W}}^T\widehat{\phi}(\mathbf{x})+\widehat{\mathbf{b}}\right )\mbox{,}	
\end{equation*}
where $f(x)$ is a monotonically increasing activation function, $\widehat{\mathbf{W}}^T$ and $\widehat{\mathbf{b}}$ are the parameters of the penultimate layer, and $\widehat{\phi}(\mathbf{x})$ is the output due to all the previous layers of the network.  A simple linear transformation within $\phi(\mathbf{x})$ is sufficient to increase the margin.  The representation power of the penultimate layer is more than sufficient to provide this transformation, by changing $\widehat{\mathbf{W}}^T$ and $\widehat{\mathbf{b}}$, without any changes to $\widehat{\phi}(\mathbf{x})$ or the location of the separating hyperplane in the feature space of $\phi(\mathbf{x})$.  Thus it's possible for the data in the feature space to  stretch away from the separating hyperplane and give a larger margin without any meaningful change to the feature extraction or classification.

As a result of Lemma \ref{lemma_meanmargin} we form a hypothesis that maximisation of geometric or mean margin is not a meaningful objective for improving generalisation in deep architectures. 

\section{Margin variance}

\begin{figure}[t!]
\vskip 0.2in
\centering
\begin{subfigure}{0.6\columnwidth}
\includegraphics[width=\textwidth]{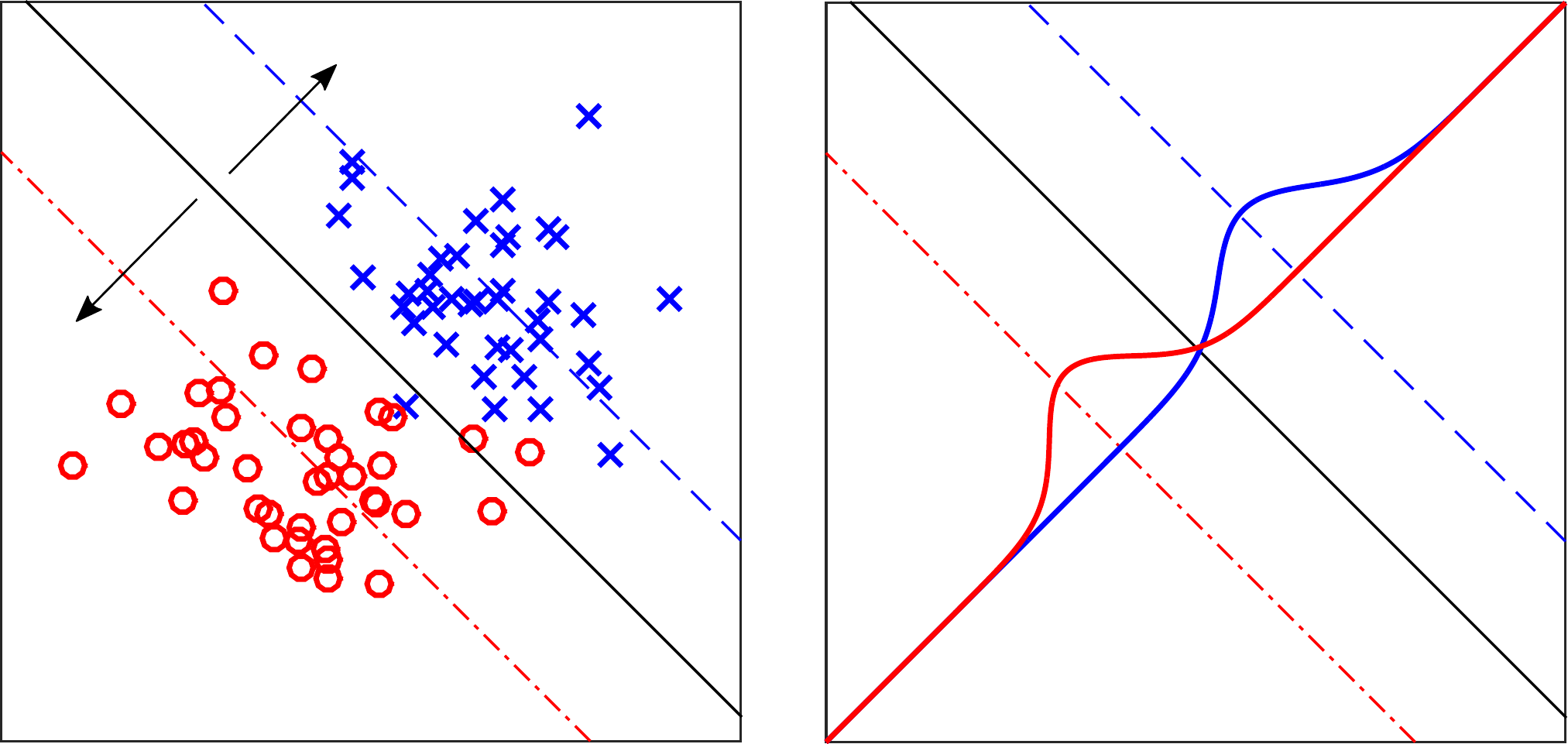}
\caption{}
\label{fig_stretch_a}
\end{subfigure}
\hskip 0.2in
\begin{subfigure}{0.6\columnwidth}
\includegraphics[width=\textwidth]{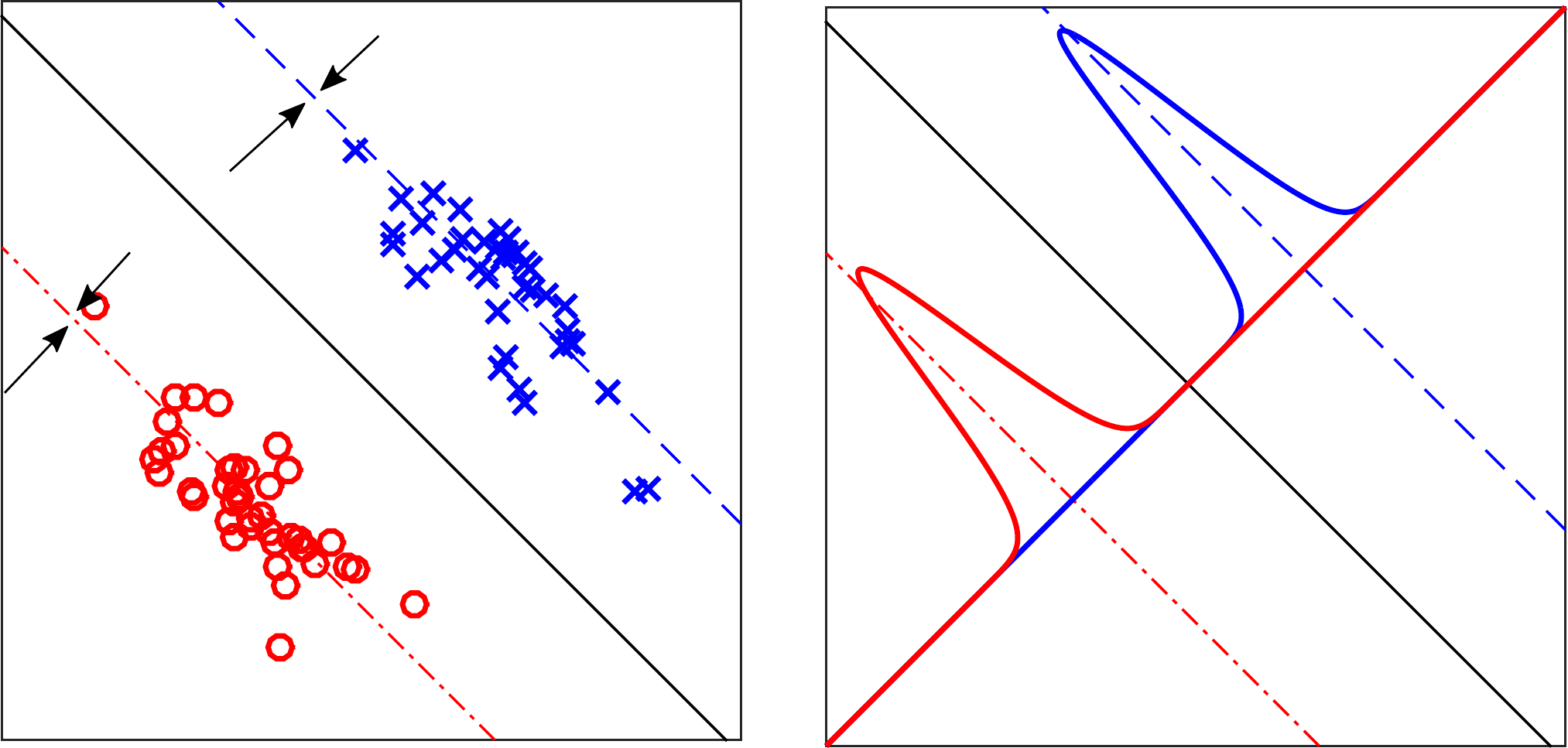}
\caption{}
\label{fig_stretch_b}
\end{subfigure}
\caption{Illustration of transformations of simple 2-dimensional $\phi(\mathbf{x})$ that increases the mean margin a); reduces the margin variance b).  The left-hand side plots shows the distribution of two-class set of points with different margin distribution and direction of the desired space transformation with the  black arrows.  The right-hand side show the distribution of the data around the margin.  The separating line is shown as a black line with the corresponding margin denoted with the red and blue dashed-lines.}  \label{fig_stretch}
\end{figure} 

Following the theory developed by \cite{Gao.etal:2013} and \cite{Zhang.etal:2016} we next consider the effect of minimising the variance of the margin in deep architectures. The variance of the margin is defined as
\begin{equation}\label{eqn_var}
\sigma=\frac{1}{m}\sum_{i=1}^m\left (\gamma_i-\mu \right)^2\mbox{.}
\end{equation}

In order to increase the mean margin, as illustrated in Figure \ref{fig_stretch_a}, it is sufficient for the feature space $\phi(\mathbf{x})$ to change so that the points stretch away from the separating hyperplane defined by $\mathbf{w}$.  This can be easily facilitated via a linear transformation, as stipulated in Lemma \ref{lemma_meanmargin}.  Figure \ref{fig_stretch_b} depicts the type of transformation that $\phi(\mathbf{x})$ needs to undergo in order to reduce the margin variance.  In addition to the stretch away from the separating hyperplane, the space must squash around two separate hyperplanes on the positive and negative margin.  It is apparent that this is a somewhat less trivial non-linear transformation, and thus more likely to be conducive to meaningful changes of $\phi(\mathbf{x})$ with respect to generalisation.  

\begin{figure}[t!]
\vskip 0.2in
\centering
\includegraphics[width=0.6\columnwidth]{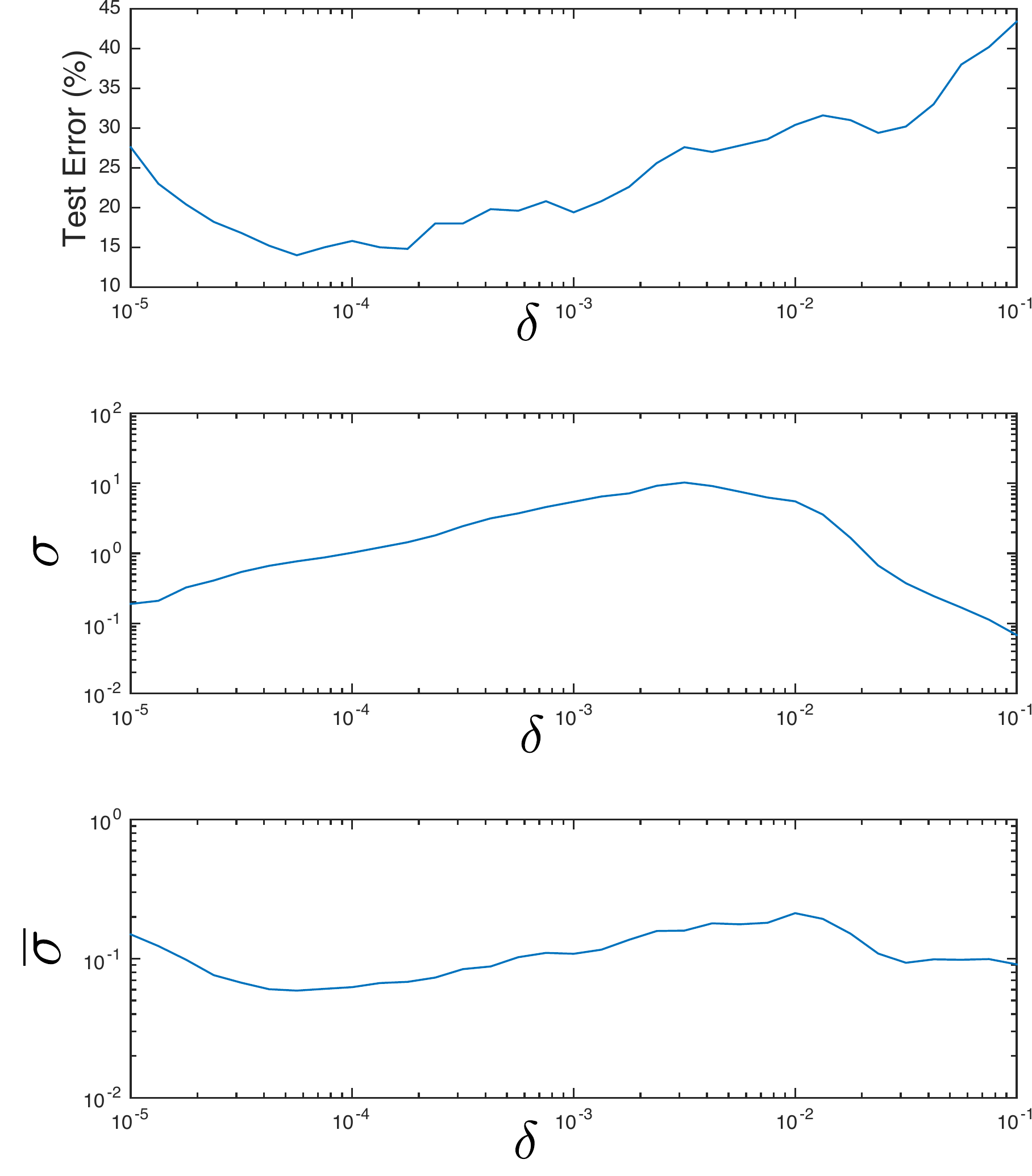}
\caption{Test error and the corresponding values for the margin variance and the normalised margin variance after SVM training on a two-class subset from the smallNORB dataset using feature extraction function defined in Equation \ref{eqn_rbfphi} over different values of parameter $\delta$. 
}\label{fig_var}
\end{figure} 

\subsection{Normalised margin}

If our hypothesis, that the mean margin value is arbitrary for changing $\phi(\mathbf{x})$, is correct it stands to reason that the variance value might be arbitrary for different $\phi(\mathbf{x})$ as well.  Indeed, if we repeat the experiment with SVM on the two-class subset of smallNORB and the feature extraction function defined in Equation \ref{eqn_rbfphi}, we can clearly see (in Figure \ref{fig_var}) that the minimum margin variance does not exactly coincide with the minimum test error in terms of the $\delta$ value that specifies the curvature of $\phi(\mathbf{x})$.  It should be noted that the SVM training does not strive to minimise the variance of the margin, but rather to maximise the geometric margin.  However, given that margin is not consistent for different $\phi(x)$, it's not unreasonable to assume that variance won't be either.  Hence we propose the normalised margin variance (NMV) defined as 

\begin{equation}\label{eqn_nmv}
\overline{\sigma}=\frac{1}{m}\sum_{i=1}^m\left (\overline{\gamma}_i-\overline{\mu} \right)^2\mbox{,}
\end{equation}
where
\begin{equation*}
\overline{\gamma}_i=\frac{\gamma_i}{\max(\{|\gamma_1|,...,|\gamma_m|\})}\mbox{,}
\end{equation*}
and
\begin{equation*}
\overline{\mu}=\frac{1}{m}\sum_{i=1}^m\overline{\gamma}_i\mbox{.}
\end{equation*}
Equation \ref{eqn_nmv} has been designed to make the margin value $-1\le\overline{\gamma}_i\le{1}$.  It is also worth to note that in a scenario where $\gamma_i=y_i\left [\mathbf{w}^T\beta\hat\phi(\mathbf{x})+\beta b\right ]$, the proposed normalisation removes the contribution of $\beta$ to the margin value.  The normalisation becomes
\begin{equation}\label{eqn_normargin2}
\overline\gamma_i=\frac{\gamma_i}{\gamma_{j}}=\frac{y_i\beta\left [\mathbf{w}^T\hat\phi(\mathbf{x})+b\right ]}{y_{j}\beta\left [\mathbf{w}^T\hat\phi(\mathbf{x}_{j})+b\right ]}\mbox{,}
\end{equation}
where $j$ is the index of the sample that produces maximum absolute value of the margin.  The $\beta$s cancel out.  This means that the linear transformation aspect of $\phi(\mathbf{x})$, which can give an arbitrary margin value at the output, is removed from the optimisation.  

Figure \ref{fig_var} shows the normalised margin variance, $\overline{\sigma}$, for different values of $\delta$ in the SVM and two-class smallNORB experiment.  The minimum of normalised margin variance does indeed fall close to the $\delta$ that gives the smallest test error.  

\subsection{Halfway loss function}

In order to carry out empirical evaluation of the effect that minimising normalised variance has on generalisation in deep architectures, we propose the Halfway loss function defined as

\begin{equation}\label{nmv_loss}
J=\frac{1}{m}\sum_{i=1}^m\left (\overline{\sigma}_i-\frac{1}{2}\right)^2\mbox{.}
\end{equation}

It is hard not to notice the resemblance of Equation \ref{nmv_loss} to the Mean Squared Error (MSE) loss function.  MSE training does in fact strive to minimise the variance of the model's output around the value given by the target label.  The point of difference between Halfway and MSE loss is the normalisation of the margin, which in effect is the same as normalisation of the model's output.

The motivation for normalisation, as discussed in the previous section, is to obtain consistency of the margin variance across different $\phi(\mathbf{x})$.  However, a consequence of this normalisation is that optimisation does not enforce an absolute target value for the output, but rather a relative value with respect to other outputs.  We hypothesise that part of the reason why Softmax is so successful in deep learning is that it allows the model to produce output in any range, as long as the relative value of the correct class neuron is larger than the value of other outputs.  This allows the deep model to operate in its \textit{natural} range, the values of the output being a result of the dynamics arising from the learner's architecture and the type of optimisation.  This natural range might be also the reason why RELU activation function  works so well with Softmax.  Normalisation of the margin assures that Halfway loss, in contrast to MSE, does allow the model to operate in its natural range, though still drives the model to produce positive and negative output in correspondence to the sign of the target label.

The Halfway loss is basically a MSE loss that minimises the margin of a classifier around half way to the current maximum value of absolute margin, $\frac{1}{2}\max(\{|\gamma_1|,...,|\gamma_m|\})$.  The choice of $\frac{1}{2}$ for the target value for the normalised margin is based on the assumption that the mean of the margin is somewhere between 0 and the current maximum value.

\subsection{Halfway loss for multi-class classification}

For multi-class classification, where label $\mathbf{y}_i=\{-1,1\}^K$ we propose a one-against-rest training scheme with a cost sensitive-learning-like \cite{Elkan:2001} multi-class weighting factor to correct the natural imbalance of the positive to negative label ratio.  In an $m$-point dataset with even distribution of $K$ classes, that is $\frac{m}{K}$ examples of each class, a given output will be trained on $\frac{m}{K}$ positive and $\frac{(K-1)m}{K}$ negative labels.  This imbalance would mean that negative labels gain more variance reduction as opposed to the positive ones.  In order to correct this, we propose the following Halfway loss for output $k$:

\begin{equation}\label{nmv_multiloss}
J_k=\frac{1}{m}\sum_{i=1}^m\overline{y}_{ki}\left (\overline{\gamma}_{ki}-\frac{1}{2}\right)^2\mbox{,}
\end{equation}
where
\begin{equation*}
\overline{y}_{ki}=\begin{cases}1 & y_{ki}=1 \\ \frac{1}{K-1} & y_{ki}=-1\mbox{.}\end{cases}
\end{equation*}
The symbols $\overline{\gamma}_{ki}$ and $y_{ki}$ represent the normalised margin and the target label of the $k^{\mbox{th}}$ output for input $i$.  The multi-class weighting factor $\overline{y}_{ki}$ can be derived from the label as follows:
\begin{equation*}
\overline{y}_i = \frac{(K-2)y_i+K}{2K-2}\mbox{.}
\end{equation*}
Note that the two-class Halfway loss defined in Equation \ref{nmv_loss} is analogous to $K=2$ case of the multi-class Halfway loss defined in equation \ref{nmv_multiloss}.  

\section{Empirical evaluation of Halfway loss}

\begin{table}[t!]
\begin{center}
\caption {Averate test error (in \% of misclassified samples) for the Softmax vs Halfway training over 10 trials on the MNIST, smallNORB and CIFAR-10 datasets and three deep architectures. 
}\label{tbl_multiclass} 
\begin{tabular}{| l |c | c |}
\hline
 & Softmax & Halfway \\
 \hline
\multicolumn{3}{| l | }{MNIST} \\
\hline
FC-128-32 		& 2.26 $\pm$ 0.09 & 2.32 $\pm$ 0.11 \\
FC-500-500-2000 & 1.96 $\pm$ 0.11 & 1.62 $\pm$ 0.12 \\
CNN 			& 0.62 $\pm$ 0.05 & 0.55 $\pm$ 0.04 \\
\hline
\multicolumn{3}{| l | }{smallNORB} \\
\hline
FC-128-32 		& 31.99 $\pm$ 2.75 & 31.75 $\pm$ 5.56 \\
FC-500-500-2000 & 27.84 $\pm$ 3.55 & 22.64 $\pm$ 1.50 \\
CNN 			& 12.51 $\pm$ 1.07 & 11.01 $\pm$ 1.16 \\
\hline
\multicolumn{3}{| l | }{CIFAR10} \\
\hline
FC-128-32 		& 50.12 $\pm$ 0.50 & 51.65 $\pm$ 0.69 \\
FC-500-500-2000 & 51.29 $\pm$ 0.49 & 46.60 $\pm$ 0.43 \\
CNN 			& 30.06 $\pm$ 0.94 & 27.83 $\pm$ 0.47 \\
\hline 
\end{tabular}
\end{center}
\end{table}

\begin{figure}[t!]
\vskip 0.2in
\centering
\begin{subfigure}{0.6\columnwidth}
\includegraphics[width=\textwidth]{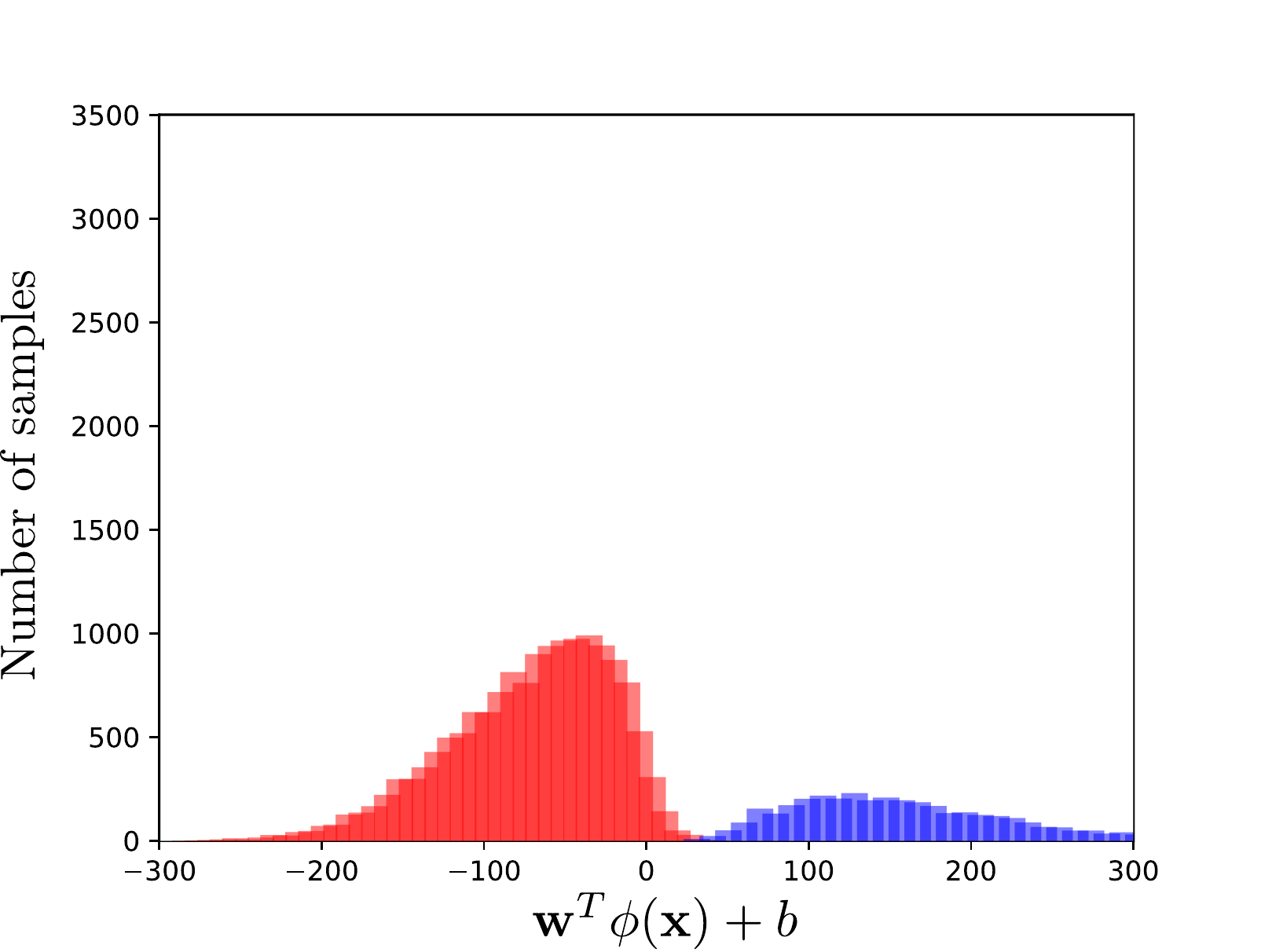}
\caption{Softmax loss}
\label{fig_dist_smax}
\end{subfigure}
\hskip 0.2in
\begin{subfigure}{0.6\columnwidth}
\includegraphics[width=\textwidth]{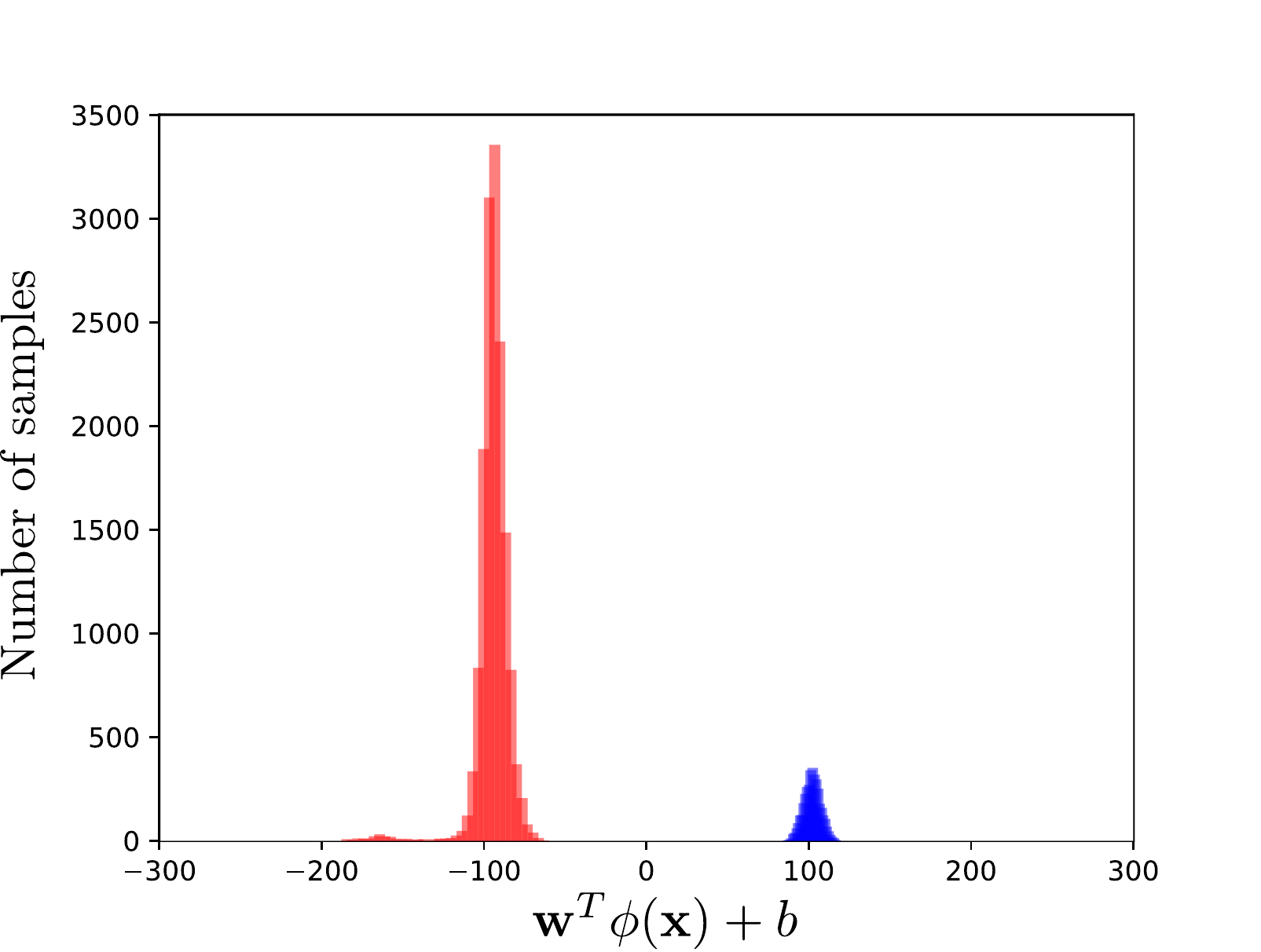}
\caption{NMV loss}
\label{fig_dist_nmv}
\end{subfigure}
\caption{Distribution of the positively (blue) and negatively (red) labelled output of the train data of a single output of a smallNORB-trained CNN.    }\label{fig_dist}
\end{figure} 

\begin{figure}[t!]
\vskip 0.2in
\centering
\includegraphics[width=0.55\columnwidth]{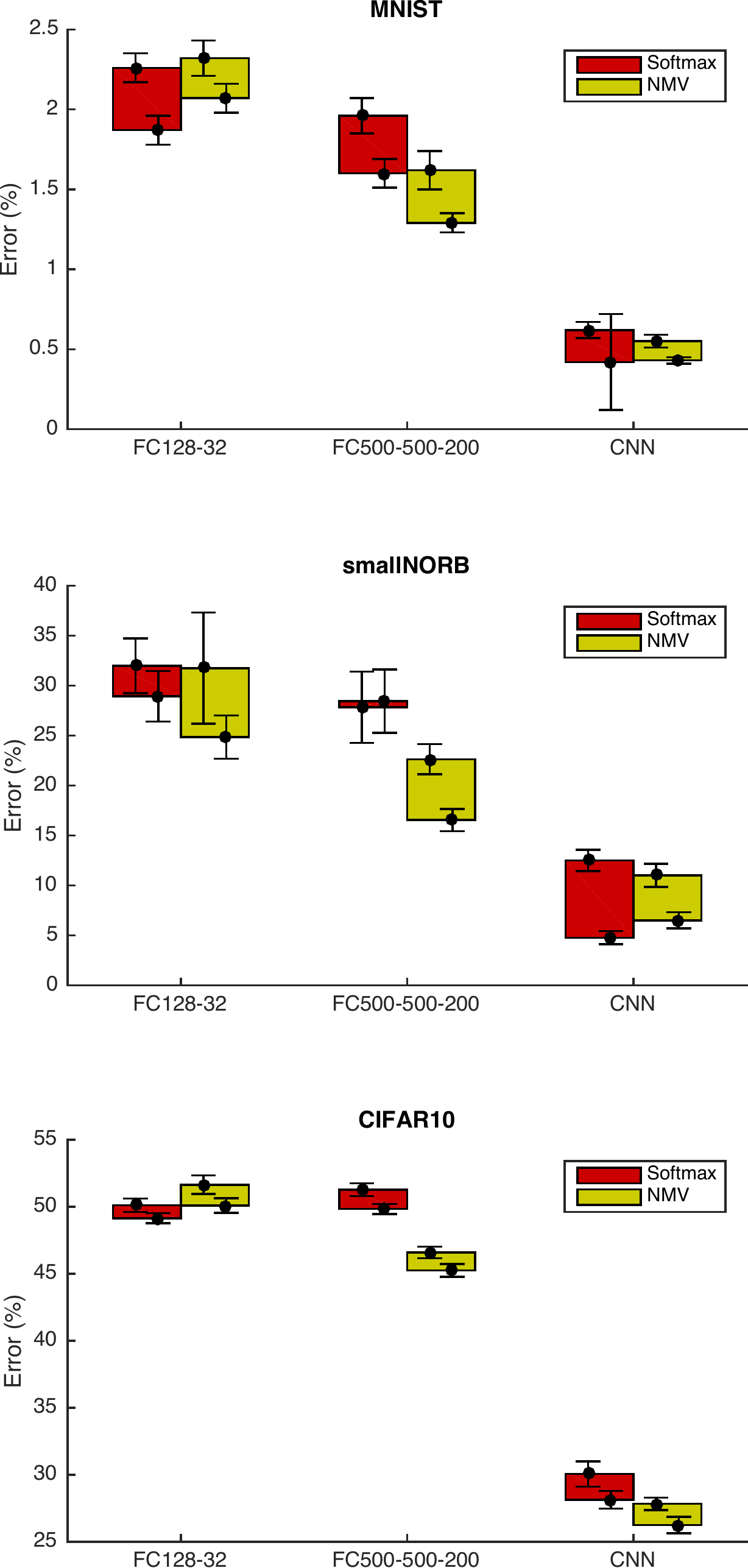}
\caption{Generalisation error between the validation and test data represented as length of the bars.  The top of the bar is placed at the test error mean and bottom at the validation error mean for 10 trials of different architectures and datasets.  The standard deviation of the test error is shown on the left hand-side of each bar, the standard deviation for the validation error shown on the right-hand side of each bar.  The errors for Softmax are shown in red and for Halfway in yellow.}
\label{fig_valid}
\end{figure} 

The three datasets that we will use for evaluation of the Halfway loss are the MNIST \cite{LeCun.etal:1998}, smallNORB \cite{LeCun.etal:2004} and CIFAR-10 \cite{Krizhevsky:2009} datasets.  Each set comes pre-divided into a training (60000 MNIST, 24300 smallNORB, 50000 CIFAR-10) and testing (10000 MNIST, 243000 smallNORB, 10000 CIFAR-10) set of samples.  For our evaluation, we have split each training part of the dataset into a set of images used to train the models (55000 MNIST, 19440 smallNORB, 45000 CIFAR-10), and a validation set (5000 MNIST, 4860 smallNORB, 5000 CIFAR-10) used to determine the best manifestation of the model.  The choice of the validation sample for MNIST and CIFAR-10 was made randomly, whereas for the smallNORB dataset it was all the images of a specific instance of each of the class of toys, with a random choice of which instance was used for validation.  For the smallNORB dataset only the left camera images were used.  

	Training was done with mini-batch optimisation, with a 500 sample batch-size for MNIST and CIFAR-10, and a 405 sample batch-size for smallNORB.  The normalisation of the margin was carried out independently in each batch, which makes the Halfway loss somewhat similar to the batch normalisation transform proposed by \citet{Ioffe.etal:2015}.  However, whereas the objective of batch normalisation requires computing the mean and variance across the batch sample in order to normalise its first and second order statistics, our normalisation of the margin only divides the data by the maximum absolute value of the output.  Also, the aim of producing unity variance of the batch sample in batch normalisation, regardless of the label, is counter-objective to ours, which is to minimise the variance around the margin of different labels.  Nevertheless, it is quite possible that Halfway loss minimisation with min-batch training shares some of the effects of reducing the \textit{internal covariate shift} of batch normalisation in the output layer of the network. 
	
All the evaluations were done using TensorFlow \cite{tensorflow2015-whitepaper}, which provides automatic computation of the gradients required for optimisation.  In our implementation there was no constraint placed on $\mathbf{w}$ to make it a unit vector, since margin normalisation, as shown in Equation \ref{eqn_normargin2}, has the same effect making individual normalisation of $\mathbf{w}$ irrelevant.

Three different models were used for classification of each dataset.  The small fully connected neural network (FC-128-32) consisted of 2-hidden layers with 128 and 32 neurons in the consecutive layers.  The big fully connected neural network (FC-500-500-2000) consisted of 3-hidden layers with 500, 500 and 2000 neurons in the consecutive layers. Finally a CNN model was used for classification in each dataset.  For the MNIST and smallNORB dataset a CNN consisted of two convolutional layers.  The first convolutional layer had 32 filters of 5x5 size and stride of 1 followed by a 2x2 input max-pooling with stride 2; the second convolutional layer had 64 filters of 5x5  size and stride 1 followed by a 2x2 input max-pooling with stride of 2; this was followed with a 512-neuron fully connected layer and 0.5 probability dropout during training.  For the CIFAR-10 model the CNN consisted of two convolutional layers also.  The first convolutional layer had 54 filters of 5x5 size and stride 1 followed by a 3x3 input max-pooling with stride of 2 and local response normalisation; the second convolutional layer consisted of 64 filters of 5x5 size and stride of 1 followed by local response normalisation and a 3x3 input max-pool layer with stride of 2; this was followed by two fully connected layers of 384 and 192 neurons.  The activation function used in all networks was ReLU.

The optimisation for all tests was done using Tensorflow's implementation of the Adam optimiser \cite{Diederik.etal:2014}.  The learning rate for all runs was set to $0.001$ and the maximum number of training epochs, with one epoch training over the entire set of mini-batch blocks, was set to 2000.  There was no regularisation in any of the models, as the purpose of this evaluation was to compare the sole effect of the compared loss functions.  Hence, rather than going for the state of the art results, we are aiming for a comparative study of the effect of minimisation of margin variance as compared to Softmax with Cross-Entropy loss function training.  The $\mathbf{y}\in\{0,1\}^K$ coding was used for target labels during Softmax training and $\mathbf{y}\in\{-1,1\}^K$ coding was used for target labels during Halway loss training.

Table \ref{tbl_multiclass} reports the average test error and variance over 10 trials with the same initial values of weights and biases for a given trial between the Softmax and Halfway optimisation.  Test error was measured by taking the output of with the maximum value to indicate the index of the identified class.  The reported test error comes from the model state at the training epoch that produced the lowest validation error.  The Halfway loss consistently leads to lower test error, sometimes by quite a significant amount.

Figure \ref{fig_dist} shows the distribution of the values across a single output, $\mathbf{w}\phi(\mathbf{x})+b$, from the entire train set on the smallNORB-trained CNN using Softmax and Halfway optimisation.  Halfway trained model does indeed produce output with a smaller variance around the margins while maintaining the values in a similar range to the Softmax trained model.

It is also interesting to examine the difference between the validation and test error.  In some way, it gives an idea of the generalisation error.  Validation error stands for the empirical risk,  since it was used during training to choose the best model (deemed to be the one that gives minimum validation error).  The test error, although still just an average, simulates the true risk, since it has not been seen by the learner during the training.  Figure \ref{fig_valid} shows a bar plot comparing the generalisation error between Softmax and Halfway training for tested datasets and architectures.  The length of the bars in the plot corresponds to the generalisation error; top and the bottom positions of each bar demarks the test and validation error respectively.  The desired characteristic is for the top of the bar to be as lows as possible (low test error) and the bar to be as short as possible (validation error being close to the test error).  Although not in every single case, Halfway looks to outperform Softmax in a combination of lower test error and/or smaller generalisation error.

\section{Discussion}

When it comes to the bigger models, FC-500-500-2000 and CNN, those trained with the Halfway loss consistently outperform those trained with the Softmax Cross-Entropy in terms of the mean and also the standard deviation of the test error over multiple trials of different initial conditions.  At the same time, for the small network, FC-128-32, Halfway training performs consistently worse (although, aside from smallNORB, it is only a bit worse).  An intuitive explanation for this is that Halfway loss is more constrained than Softmax in terms of what it demands of distribution of points in $\phi(\mathbf{x})$.  While these constraints are demonstrably favourable to generalisation in representationally rich models, they might be getting in the way of class separation objective in representationally limited models.  In other words, Halfway loss may provide a better objective for classification, but an objective that is a bit harder to attain in models with limited transformation dynamics.  

We also found that the cost-sensitive learning aspect of the Halfway loss was critical for its good performance.  This is most interesting given that previously work by \citet{Zhou.etal:2006} found cost-sensitive learning not to be useful for multi-class one-against-rest optimisation, albeit for different loss functions.  The motivation for class weighting in Halfway loss is to ensure that the optimisation does not drive the variance around the negative margin to a smaller value than the variance around the positive margin.  We take the need for class balancing in Halfway training as a confirmation of our assumption that an even reduction of variance around the positive and negative margin is critical for good generalisation.

The MSE-like nature of the Halfway loss has a disadvantage in that it presumes a normal distribution of the data around the margin.  While it does succeed in minimising the margin variance, it also produces a symmetric distribution of data around the margin (as Figure \ref{fig_dist_nmv} shows).  It is possible that minimisation of variance while producing distributions skewed away from the margin might improve the generalisation even further.    

\section{Conclusion}

We have taken the ideas around margin distribution from boosting theory and applied them to deep learning.  The driving hypothesis of our work was that maximisation of the margin alone is not a useful objective for architectures where the feature extraction function changes during optimisation.  However, minimisation of margin variance might be.  We have provided some theoretical evidence that maximisation of margin in a neural network might be trivial.  This we followed with empirical investigation of the importance of margin variance.

We proposed the Halfway loss function as the training objective that minimises the normalised margin variance .  It's an MSE-like training objective with cost-sensitive learning that aims to reduce variance around halfway point between 0 and maximum margin value (as calculated from the training dataset).  Our empirical evaluation on known image datasets demonstrates superiority of Halfway over the Softmax Cross-Entropy loss in representationally rich fully connected, as well as convolutional, neural networks.  We also confirmed that in the balance of things, Halfway loss does seem to provide better generalisation - in terms of producing a validation test score that is a better estimation of the test score, while ensuring better test data performance.
 
For the future work, given the empirical evidence this work presents, we believe it would be worthwhile to find theoretical proofs that establish the significance of margin variance as well as the irrelevance of the margin mean for generalisation in deep architectures.  On the empirical side, it might be also possible to form better loss functions which minimise margin variance but do not enforce symmetric distribution of the points around the margin, and thus possibly lead to even better generalisation. 

\section*{Acknowledgements} 

We gratefully acknowledge the support of NVIDIA Corporation with the donation of the Titan X GPU used for this research.
 

\bibliography{references}
\bibliographystyle{icml2017}

\end{document}